\newtheorem{theorem}{Theorem}
\newtheorem{lemma}[theorem]{Lemma}
\theoremstyle{definition}
\newtheorem{definition}{Definition}
\DeclareMathOperator*{\argmax}{arg\,max}
	\theoremstyle{plain}
\title{End-to-End Safe Reinforcement Learning through Barrier Functions for Safety-Critical Continuous Control Tasks}
\author{
	Richard Cheng,\textsuperscript{\rm 1}
	G\'abor Orosz,\textsuperscript{\rm 2}
	Richard M. Murray,\textsuperscript{\rm 1}
	Joel W. Burdick,\textsuperscript{\rm 1}\\
	\textsuperscript{\rm 1}California Institute of Technology,
	\textsuperscript{\rm 2}University of Michigan, Ann Arbor \\
}
\begin{document}
\maketitle

\begin{abstract}

Reinforcement Learning (RL) algorithms have found limited success beyond simulated applications, and one main reason is the absence of safety guarantees \textit{during} the learning process. Real world systems would realistically fail or break before an optimal controller can be learned. To address this issue, we propose a controller architecture that combines (1) a model-free RL-based controller with (2) model-based controllers utilizing control barrier functions (CBFs) and (3) on-line learning of the unknown system dynamics, in order to ensure safety during learning. Our general framework leverages the success of RL algorithms to learn high-performance controllers, while the CBF-based controllers both {\em guarantee} safety and \textit{guide} the learning process by constraining the set of explorable polices. We utilize Gaussian Processes (GPs) to model the system dynamics and its uncertainties.

Our novel controller synthesis algorithm, RL-CBF, guarantees safety with high probability during the learning process, regardless of the RL algorithm used, and demonstrates greater policy exploration efficiency. We test our algorithm on (1) control of an inverted pendulum and (2) autonomous car-following with wireless vehicle-to-vehicle communication, and show that our algorithm attains much greater sample efficiency in learning than other state-of-the-art algorithms \textit{and} maintains safety during the entire learning process. 

\end{abstract}

\section{Introduction}

Reinforcement learning (RL) focuses on finding an agent's policy (i.e. controller) that maximizes a long-term reward. It does this by repeatedly observing the agent's state, taking an action (according to a current policy), and receiving a reward.  Over time, the agent modifies its policy to maximize its long-term reward. This method has been successfully applied to continuous control tasks \cite{Duan2016,Lillicrap2015} where controllers have learned to stabilize complex robots (after many policy iterations).

However, since RL focuses on maximizing the long-term reward, it is likely to explore unsafe behaviors during the learning process. This feature is problematic for any RL algorithm that will be deployed on hardware, as unsafe learning policies could damage the hardware or bring harm to a human. As a result, most success in the use of RL for control of physical systems has been limited to simulations, where many failed iterations can occur before success.

Safe RL tries to learn a policy that maximizes the expected return, while also ensuring (or encouraging) the satisfaction of some safety constraints \cite{Garcia2015}. Previous approaches to safe reinforcement learning include \textit{reward-shaping}, \textit{policy optimization with constraints} \cite{Gaskett2003a,Moldovan2012,Achiam2017,Wachi2018}, or \textit{teacher advice} \cite{Abbeel2004,Abbeel2010,Tang2010}. However, these model-free approaches do not \textit{guarantee} safety during learning -- safety is only \textit{approximately} guaranteed after a sufficient learning period. The fundamental issue is that without a model, safety must be learned through environmental interactions, which means it may be violated during initial learning interactions.

Model-based approaches have utilized Lyapunov-based methods or model predictive control to guarantee safety under system dynamics during learning \cite{Wang2017,Berkenkamp2017,Chow2018,Ohnishi2018,Koller2018}, but they do not address the issue of exploration and performance optimization. Other works guarantee safety by switching between backup controllers \cite{Perkins2003,Mannucci2018}, though this overly constrains policy exploration. 

We draw inspiration from recent work that has incorporated model information into model-free RL algorithms to ensure safety during exploration \cite{Fisac2018,Li2018,Gillula2012}. However, these approaches utilize backup safety controllers that do not guide the learning process (limiting exploration efficiency).

This paper develops a framework for integrating existing model-free RL algorithms with \textit{control barrier functions} (CBFs) to guarantee safety and improve exploration efficiency in RL, even with uncertain model information. The CBFs require a (potentially poor) nominal dynamics model, but can ensure online safety of nonlinear systems during the entire learning process \textit{and} help the RL algorithm efficiently search the policy space. This methodology effectively constrains the policy exploration process to a set of safe polices defined by the CBF. An on-line process learns the governing dynamical system over time, which allows the CBF controller to adapt and become less conservative over time. This general framework allows us to utilize \textit{any} model-free RL algorithm to learn a controller, with the CBF controller guiding policy exploration and ensuring safety. 

Using this framework, we develop an efficient algorithm for controller synthesis, RL-CBF, with guarantees on safety (remaining within a safe set) and performance (reward-maximization). To test this approach, we integrated two model-free RL algorithms -- \textit{trust region policy optimization} (TRPO) \cite{Schulman2015} and \textit{deep deterministic policy gradients} (DDPG) \cite{Lillicrap2015} -- with the CBF controllers and dynamical model learning. We tested the algorithms on two nonlinear control problems: (1) balancing of an inverted pendulum, and (2) autonomous car following with wireless vehicle-to-vehicle communication. For both tasks, our algorithm efficiently learned a high-performance controller while maintaining safety throughout the learning process. Furthermore, it learned faster than comparable RL algorithms due to inclusion of a model learning process, which constrains the space of explorable policies and guides the exploration process. 

Our main contributions are: (1) we develop the first algorithm that integrates CBF-based controllers with model-free RL to achieve end-to-end safe RL for nonlinear control systems, and (2) we show improved learning efficiency by guiding the policy exploration with barrier functions.

\label{Introduction}

\section{Preliminaries}

Consider an infinite-horizon discounted Markov decision process (MDP) with control-affine, deterministic dynamics (a good assumption when dealing with robotic systems), defined by the tuple $(S, A, f, g, d, r, \rho_0, \gamma)$, where $S$ is a set of states, $A$ is a set of actions, $f : S \rightarrow S$ is the nominal unactuated dynamics, $g : S \rightarrow \mathbb{R}^{n,m}$ is the nominal actuated dynamics, and $d : S \rightarrow S$ is the \textit{unknown} system dynamics. The time evolution of the system is given by
\begin{equation}
s_{t+1} = f(s_t) + g(s_t) a_t + d(s_t) ,
\label{eq:transition_dynamics_2}
\end{equation}

\noindent
where $s_t \in S$, $a_t \in A$, $f$ and $g$ compose a known nominal model of the dynamics, and $d$ represents the unknown model. In practice, the nominal model may be quite bad (e.g. a robot model that ignores friction and compliance), and we must learn a much better dynamic model through data.

Furthermore $r : S \times A \rightarrow \mathbb{R}$ is the reward function, $\rho_0 : S \rightarrow \mathbb{R}$ is the distribution of the initial state $s_0$, and $\gamma \in (0,1)$ is the discount factor.

\subsection{Reinforcement Learning}

Let $\pi(a|s)$ denote a stochastic control policy $\pi : S \times A \rightarrow [0,1]$ that maps states to distributions over actions, and let $J(\pi)$ denote the policy's expected discounted reward:
\begin{equation}
 J(\pi) = \mathbb{E}_{\tau \sim \pi} [\sum_{t=0}^{\infty} \gamma^t r(s_t) ] .
\label{eq:RL_cost}
\end{equation}
Here $\tau \sim \pi$ is a trajectory $\tau = \{s_t, a_t, ..., s_{t+n}, a_{t+n} \}$ where the actions are sampled from policy $\pi(a|s)$. We use the standard definitions for the value function $V_{\pi}$, action-value function $Q_{\pi}$, and advantage function, $A_{\pi}$ below:
\begin{equation*}
\begin{split}
& Q_{\pi}(s_t, a_t) = \mathbb{E}_{s_{t+1},a_{t+1},...} \Big[ \sum_{l=0}^{\infty} \gamma^l r(s_{t+l}, a_{t+l}) \Big], \\
& V_{\pi}(s_t) = \mathbb{E}_{a_t, s_{t+1},a_{t+1},...} \Big[ \sum_{l=0}^{\infty} \gamma^l r(s_{t+l},a_{t+l}) \Big],
\end{split}
\label{eq:value_function_1}
\end{equation*}

\begin{equation}
\begin{split}
& A_{\pi}(s_t, a_t) = Q_{\pi}(s_t, a_t) - V_{\pi}(s_t),  \\
\end{split}
\label{eq:value_function}
\end{equation}

\noindent
where actions $a_i$ are drawn from distribution $a_i \sim \pi(a|s_i)$.

Most policy optimization RL algorithms attempt to maximize long-term reward $J(\pi)$ using (a) policy iteration methods \cite{Bertsekas2005}, (b) derivative-free optimization methods that optimize the return as a function of policy parameters \cite{Fu2005}, or (c) policy gradient methods \cite{Peters2008,Silver2014}. \textit{Any} of these methods can be rendered end-to-end safe using the RL-CBF control framework proposed in this work. However, we will focus mainly on policy gradient methods, due to their good performance on continuous control problems.

\label{RL_prelim}

\subsubsection{Policy Gradient-Based RL}

Policy gradient methods estimate the gradient of the expected return $J(\pi)$ with respect to the policy based on sampled trajectories. They then optimize the policy using gradient ascent, allowing modification of the control law at episodic intervals. The DDPG and TRPO algorithms are examples of policy gradient methods, which we will use as benchmarks.  

DDPG is an off-policy actor-critic method that computes the policy gradient based on sampled trajectories and an estimate of the action-value function. It alternately updates the action-value function and the policy as it samples more and more trajectories. 

TRPO is an on-policy policy gradient method that maximizes a surrogate loss function, which serves as an approximate lower bound on the true loss function. It also ensures that the next policy distribution is within a ``trust region''. More precisely, it approximates the optimal policy update by iteratively solving the optimization problem:
\begin{equation}
\begin{split}
& \pi_{i+1} = \argmax_{\pi} \sum_s \rho_{\pi_i}(s) \sum_a \pi(a | s) A_{\pi_i} (s,a)
\end{split}
\label{eq:trpo_11}
\end{equation}

\noindent 
such that the Kullback-Leibler divergence $D_{KL} (\pi_{i}, \pi_{i+1}) \leq \delta_{p}$. Here $\rho_{\pi_i}(s)$ represents the discounted visitation frequency of state $s$ under policy $\pi_i$, and $\delta_p$ is a constant defining the ``trust region''.

Though both DDPG and TRPO have learned good controllers on several benchmark problems, there is no guarantee of safety in these algorithms, \textit{nor any other model-free RL algorithm}. Therefore, our objective is to complement model-free RL controllers with model-based CBF controllers (using a potentially poor nominal model), which can both improve search efficiency and ensure safety.

\subsection{Gaussian Processes}

We use Gaussian process (GP) models to estimate the unknown system dynamics, $d(s)$, from data. A Gaussian process is a nonparametric regression method for estimating functions \textit{and their uncertain distribution} from data \cite{Rasmussen2006}. It describes the evolving model of the uncertain dynamics, $d(s)$, by a mean estimate, $\mu_d(s)$, and the uncertainty, $\sigma_d^2(s)$, which allows for high probability confidence intervals on the function:
\begin{equation}
\begin{split}
& \mu_d(s) - k_{\delta} \sigma_d(s) \leq d(s) \leq \mu_d(s) + k_{\delta} \sigma_d(s), \\
\end{split}
\label{eq:GP_bounds}
\end{equation}

\noindent
with probability $(1-\delta)$ where $k_{\delta}$ is a design parameter that determines $\delta$ (e.g. $95\%$ confidence is achieved at $k_{\delta} = 2$). Therefore, by learning $\mu_d(s)$ and $\sigma_d(s)$ in tandem with the controller, we obtain high probability confidence intervals on the unknown dynamics, which adapt/shrink as we obtain more information (i.e. measurements) on the system.

A GP model is parameterized by a kernel function $k(s,s')$, which defines the similarity between any two states $s,s' \in S$. In order to make inferences on the unknown function $d(s)$, we need measurements, $\hat{d}(s)$, which are computed from measurements of ($s_t, a_t, s_{t+1}$) using the relation from Equation (\ref{eq:transition_dynamics_2}): $\hat{d}(s_t) = s_{t+1} - f(s_t) - g(s_t) a_t$. Since any finite number of data points form a multivariate normal distribution, we can obtain the posterior distribution of $d(s_{*})$ at any query state $s_{*} \in S$ by conditioning on the past measurements. Given $n$ measurements $y_n = [\hat{d}(s_1), \hat{d}(s_2), ..., \hat{d}(s_n)]$ subject to independent Gaussian noise $\nu_{noise} \sim \mathcal{N}(0,\sigma_{noise}^2)$, the mean $\mu_d(s_*)$ and variance $\sigma_d^2(s_*)$ at the query state, $s_*$, are calculated to be,
\begin{equation}
\begin{split}
& \mu_d(s_*) = k_*^T(s_*) (K + \sigma_{noise}^2 I)^{-1} y_n , \\
& \sigma_d^2(s_*) = k(s_*, s_*) - k_*^T(s_*) (K + \sigma_{noise}^2 I)^{-1} k_*(s_*) ,
\end{split}
\label{eq:GP_model}
\end{equation}

\noindent
where $K_{i,j} = k(s_i, s_j)$ is the kernel matrix, and $k_* = [k(s_1, s_*), k(s_2, s_*), ..., k(s_n, s_*)]$. As we collect more data, $\mu_d(s)$ becomes a better estimate of $d(s)$, and the uncertainty, $\sigma_d^2(s)$, of the dynamics decreases.

We note that in applications with large amounts of data, training the GP becomes problematic since computing the matrix inverse in Equation (\ref{eq:GP_model}) scales poorly ($N^3$ in the number of data points). There are several methods to alleviate this issue, such as using sparse inducing inputs or local GPs \cite{Snelson2007,Nguyen-Tuong2009}. In fact, our framework can use any model approximation method that provides quantifiable uncertainty bounds (e.g. neural networks with dropout). However, we bypass this issue in this work by batch training the GP model with only the latest batch of $\approx1000$ data points.

\label{GP_prelim}

\subsection{Control Barrier Functions}

Consider an arbitrary safe set, $\mathcal{C}$, defined by the super-level set of a continuously differentiable function $h: \mathbb{R}^n \rightarrow \mathbb{R}$,
\begin{equation}
\begin{split}
&  \mathcal{C}: \{s \in \mathbb{R}^n: h(s) \geq 0 \}.
\end{split}
\label{eq:safe_set}
\end{equation}

To maintain safety during the learning process, the system state must always remain within the safe set $\mathcal{C}$ (i.e. the set $\mathcal{C}$ is \textit{forward invariant}). Examples include keeping a manipulator within a given workspace, or ensuring that a quadcopter avoids obstacles. Essentially, the learning algorithm should learn/explore only in set $\mathcal{C}$.

\textit{Control barrier functions} utilize a Lyapunov-like argument to provide a sufficient condition for ensuring forward invariance of the safe set $\mathcal{C}$ under controlled dynamics. Therefore, barrier functions are a natural tool to enforce safety throughout the learning process, and can be used to synthesize \textit{safe} controllers for our systems.

\begin{definition}
Given a set $\mathcal{C} \in \mathbb{R}^n$ defined by (\ref{eq:safe_set}), the continuously differentiable function $h: \mathbb{R}^n \rightarrow \mathbb{R}$ is a \textit{discrete-time control barrier function} (CBF) for dynamical system (\ref{eq:transition_dynamics_2}) if there exists $\eta \in [0,1]$  such that for all $s_t \in C$, 

\begin{equation}
\begin{split}
& \sup_{a_t \in A} \Big[ h\Big(f(s_t) + g(s_t) a_t + d(s_t) \Big) + (\eta - 1) h(s_t) \Big] \geq 0 ,\\
\end{split}
\label{eq:discrete_CBF}
\end{equation}
\label{def:CBF}
\end{definition} 

\noindent
where $\eta$ represents how strongly the barrier function ``pushes'' the state inwards within the safe set (if $\eta = 0$, the barrier condition simplifies to the Lyapunov condition).

The existence of a CBF implies that there exists a \textit{deterministic} controller $u^{CBF}: S \rightarrow A$ such that the set $\mathcal{C}$ is forward invariant for system (\ref{eq:transition_dynamics_2}) \cite{agrawal2017,ames2017}. In other words, if condition (\ref{eq:discrete_CBF}) is satisfied for all $s \in \mathcal{C}$, then the set $\mathcal{C}$ is rendered forward invariant. Our goal is to find a controller, $u^{CBF}$, that satisfies condition (\ref{eq:discrete_CBF}), so that safety is certified.

For this paper, we restrict our attention to \textit{affine} barrier functions of form $h = p^T s + q$,  ($p \in \mathbb{R}^n, q \in \mathbb{R}$), though our methodology could support more general barrier functions. This restriction means the set $\mathcal{C}$ is composed of intersecting half spaces (i.e. polytopes). 

Before we can formulate a tractable optimization problem that satisfies condition (\ref{eq:discrete_CBF}), we must have an estimate for $d(s)$. We use an updating GP model to estimate the mean and variance of the function, $\mu_d(s)$ and $\sigma_d^2(s)$, from measurement data. From equation (\ref{eq:GP_bounds}), we know that $|\mu_d(s) - d(s)| \leq k_{\delta} \sigma_d(s) $ with probability $(1-\delta)$. Therefore, we can reformulate the CBF condition (\ref{eq:discrete_CBF}) into the following quadratic program (QP) that can be efficiently solved at each time step:

\begin{equation}
\begin{aligned}
(a_t, \epsilon) = ~ & \underset{a_t,\epsilon}{\text{argmin}}
& & \| a_t \|_2 + K_{\epsilon} \epsilon \\
& ~ \text{s.t.}
& &   p^T f(s_t) + p^T g(s_t) a_t + p^T \mu_d(s_t) - \\
& & &  k_{\delta}|p |^T \sigma_d(s_t) + q \geq (1-\eta) h(s_t) - \epsilon \\
& & & a^i_{low} \leq a_t^i \leq a^i_{high} ~~ \textnormal{for} ~ i = 1,...,M
\end{aligned}
\label{eq:optimization_GP}
\end{equation}

\noindent
where $\epsilon$ is a slack variable in the safety condition, $K_{\epsilon}$ is a large constant that penalizes safety violations, and $|p|$ denotes the element-wise absolute value of the vector $p$. The optimization is not sensitive to the $K_{\epsilon}$ parameter as long as it is very large (e.g. $10^{12}$), such that safety constraint violations are heavily penalized. The last constraint on $a_t^{i}$ encodes actuator constraints. The solution to this optimization problem (\ref{eq:optimization_GP}) enforces the safety condition (\ref{eq:discrete_CBF}) as best as possible with minimum control effort, even with uncertain dynamics. Accounting for the dynamics uncertainty through GP models allows us to certify system safety, even with a poor nominal model.

Let us define the set $\mathcal{C}_{\epsilon}: \{ s \in \mathbb{R}^n : h(s) \geq -\frac{\epsilon}{\eta} \}$. Then we can prove the following lemma. 

\begin{lemma}
	For dynamical system (\ref{eq:transition_dynamics_2}), if there exists a solution to (\ref{eq:optimization_GP}) for all $s \in \mathcal{C}$ with $\epsilon=0$, then the controller derived from (\ref{eq:optimization_GP}) renders set $\mathcal{C}$ forward invariant with probability $(1-\delta)$. 
	
	However, suppose there exists $s \in \mathcal{C}$ such that (\ref{eq:optimization_GP}) has solution with $\epsilon = \epsilon^{max} > 0$. If for all $s \in \mathcal{C}_{\epsilon}$, the solution to (\ref{eq:optimization_GP}) satisfies $\epsilon \leq \epsilon^{max}$, then the larger set $\mathcal{C}_{\epsilon}$ is forward invariant with probability $(1-\delta)$.
	\label{lemma:lemma_barrier_2}
\end{lemma}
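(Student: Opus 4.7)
The plan is to propagate the affine barrier $h(s)=p^{T}s+q$ one step forward through the dynamics~(\ref{eq:transition_dynamics_2}), lower-bound the resulting expression using the QP constraint, and conclude forward invariance. First I would evaluate
\[
h(s_{t+1}) = p^{T} f(s_t) + p^{T} g(s_t) a_t + p^{T} d(s_t) + q,
\]
so the task reduces to lower-bounding $p^{T} d(s_t)$ using the GP confidence interval~(\ref{eq:GP_bounds}).

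The key step is a componentwise argument: for each coordinate $i$, the GP bound gives $|p^{i}|\bigl(\mu_d^{i}(s)-d^{i}(s)\bigr)\le |p^{i}|\,k_{\delta}\sigma_d^{i}(s)$ with probability $(1-\delta)$, and multiplying by the sign of $p^{i}$ and summing yields
\[
p^{T} d(s_t) \;\ge\; p^{T}\mu_d(s_t) - k_{\delta}\,|p|^{T}\sigma_d(s_t)
\]
with probability $(1-\delta)$. This is exactly the term that appears on the left-hand side of the QP safety constraint in~(\ref{eq:optimization_GP}), which is precisely why $|p|$ (rather than $p$) enters that constraint. Substituting this lower bound into the expression for $h(s_{t+1})$ and then invoking the QP constraint with $\epsilon=0$ gives $h(s_{t+1}) \ge (1-\eta)h(s_t)$ with probability $(1-\delta)$. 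Since $\eta\in[0,1]$ and $h(s_t)\ge 0$, we obtain $h(s_{t+1})\ge 0$, proving forward invariance of $\mathcal{C}$.

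For the second part I would repeat the same chain of inequalities but carry the slack $\epsilon$: if $s_t\in\mathcal{C}_{\epsilon}$, then $h(s_t)\ge -\epsilon^{\max}/\eta$, and the feasibility hypothesis supplies $\epsilon\le \epsilon^{\max}$, so
\[
h(s_{t+1}) \;\ge\; (1-\eta)h(s_t) - \epsilon \;\ge\; (1-\eta)\!\left(-\tfrac{\epsilon^{\max}}{\eta}\right) - \epsilon^{\max} \;=\; -\tfrac{\epsilon^{\max}}{\eta},
\]
so $s_{t+1}\in\mathcal{C}_{\epsilon}$. This uses $(1-\eta)\ge 0$ to turn the lower bound on $h(s_t)$ into a lower bound on $(1-\eta)h(s_t)$, and then a simple cancellation $-\epsilon^{\max}/\eta + \epsilon^{\max} - \epsilon^{\max} = -\epsilon^{\max}/\eta$.

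The algebra is routine; the main subtlety I expect to have to justify carefully is the componentwise sign manipulation that produces the $|p|$ factor from the vector-valued GP confidence set, since that is what makes the probabilistic CBF constraint in~(\ref{eq:optimization_GP}) a \emph{sufficient} condition for the one-step decrease $h(s_{t+1})\ge(1-\eta)h(s_t)-\epsilon$. A secondary point worth flagging is that the probability $(1-\delta)$ is only a per-step guarantee inherited from~(\ref{eq:GP_bounds}); extending it uniformly over trajectories would formally require a union bound or a uniform GP confidence argument, but I would follow the paper's convention of reporting the single-step bound.
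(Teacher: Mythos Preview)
Your proposal is correct and follows essentially the same approach as the paper: derive $h(s_{t+1}) \ge (1-\eta)h(s_t) - \epsilon$ from the GP confidence bound~(\ref{eq:GP_bounds}) combined with the QP constraint, then conclude invariance. For the second part the paper rewrites the inequality via the shifted barrier $h_{\epsilon}(s)=h(s)+\epsilon/\eta$ and appeals to Definition~\ref{def:CBF}, whereas you substitute the bound $h(s_t)\ge -\epsilon^{\max}/\eta$ directly; these are equivalent one-line manipulations, and your explicit use of $\epsilon\le\epsilon^{\max}$ is arguably cleaner since $\epsilon$ is state-dependent.
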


\begin{proof}
	The first part of the lemma follows directly from Definition \ref{def:CBF} and the probabilistic bounds on the uncertainty obtained from GPs shown in equation (\ref{eq:GP_bounds}).
	
	For the second part, the property of GPs in equation (\ref{eq:GP_bounds}) implies that with probability $(1-\delta)$, the following inequality is satisfied under the system dynamics (\ref{eq:transition_dynamics_2}):
	\begin{equation}
	\begin{split}
	& h(s_{t+1}) \geq p^T \Big( f(s_t) + g(s_t) a_t + \mu_d(s_t) \Big) - \\
	& ~~~~~~~~~~~~~~~~~~~~~~~~~~~~~~~~~~~~ k_{\delta} | p |^T \sigma_d(s_t) + q.
	\end{split}
	\label{eq:barrier_proof_1}
	\end{equation}
	
	Therefore, the constraint in problem (\ref{eq:optimization_GP}) ensures that: 
	\begin{equation}
	\begin{split}
	& h(s_{t+1}) \geq (1 - \eta ) h(s_t) - \epsilon , \\
	& p^T s_{t+1} + q \geq (1 - \eta) (p^T s_t + q) -  \epsilon , \\
	& p^T s_{t+1} + q + \frac{\epsilon}{\eta} \geq (1 - \eta) (p^T s_t + q + \frac{\epsilon}{\eta}) .
	\end{split}
	\label{eq:barrier_proof_2}
	\end{equation}
	
	Define $h_{\epsilon}(s) = q + \frac{\epsilon}{\eta} + p^T s$, so that (\ref{eq:barrier_proof_2}) simplifies to 
	\begin{equation}
	\begin{split}
	& h_{\epsilon}(s_{t+1}) \geq (1-\eta) h_{\epsilon}(s_t) . \\
	\end{split}
	\label{eq:barrier_proof_3}
	\end{equation}
	
	By Definition \ref{def:CBF}, the set $\mathcal{C}_{\epsilon}$ defined by $h_{\epsilon}(s) = h(s) + \frac{\epsilon}{\eta} \geq 0$ is forward invariant under system dynamics (\ref{eq:transition_dynamics_2}).
\end{proof}

The CBF controllers that solve (\ref{eq:optimization_GP}) provide deterministic control laws, $u^{CBF}(s)$ that naturally encode safety; they provide the \textit{minimal} control intervention that maintains safety \textit{or} provide graceful degradation (a small deviation from the safe set) when safety cannot be enforced (e.g. due to actuation constraints). Furthermore, even with dynamics uncertainty, we can make high-probability statements about system safety using GP models with CBFs.

Note that one can easily combine multiple CBF constraints in problem (\ref{eq:optimization_GP}) to define polytopic safe regions.

\section{CBF-Based Compensating Control with Reinforcement Learning}

To illustrate our framework, we first propose the \textit{suboptimal} controller in equation (\ref{eq:framework}), which combines a model-free RL-based controller (parameterized by $\theta_k$) and a CBF-based controller in the architecture shown in Figure \ref{fig:architecture_1}.
\begin{equation}
\begin{split}
u_k(s) = u_{\theta_k}^{RL}(s) + u_k^{CBF}(s, u_{\theta_k}^{RL}).
\end{split}
\label{eq:framework}
\end{equation}

The concept is akin to shielded RL \cite{Alshiekh2017,Fisac2018}, since the CBF controller compensates for the RL controller to ensure safety, but does not guide exploration of the RL algorithm. The next section will extend the CBF controller to improve RL policy exploration.

Note that since the RL policy $\pi_{\theta_k}^{RL}(a|s)$ is stochastic (see Preliminaries section on RL), the controller $u_{\theta_k}^{RL}(s)$ represents the realization (i.e. sampled control action) of the stochastic policy $\pi_{\theta_k}^{RL}(a|s)$ after policy iteration $k$.

\begin{figure}[!h]
	\centering
	\begin{subfigure}[b]{0.5\textwidth}
		\includegraphics[width=0.9\linewidth]{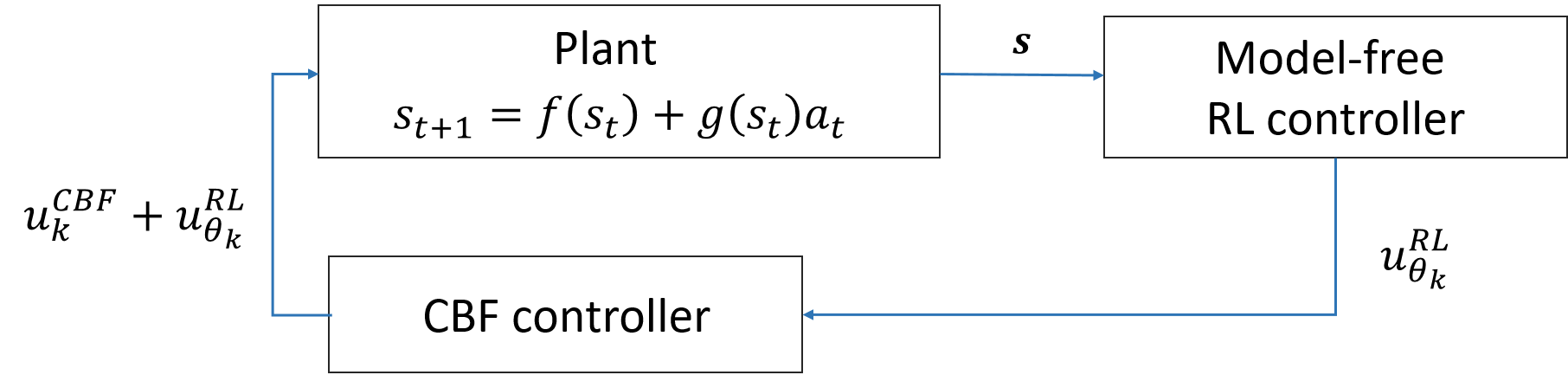}
		\caption{}
		\label{fig:architecture_1} 
	\end{subfigure}
	
	\begin{subfigure}[b]{0.5\textwidth}
		\includegraphics[width=0.9\linewidth]{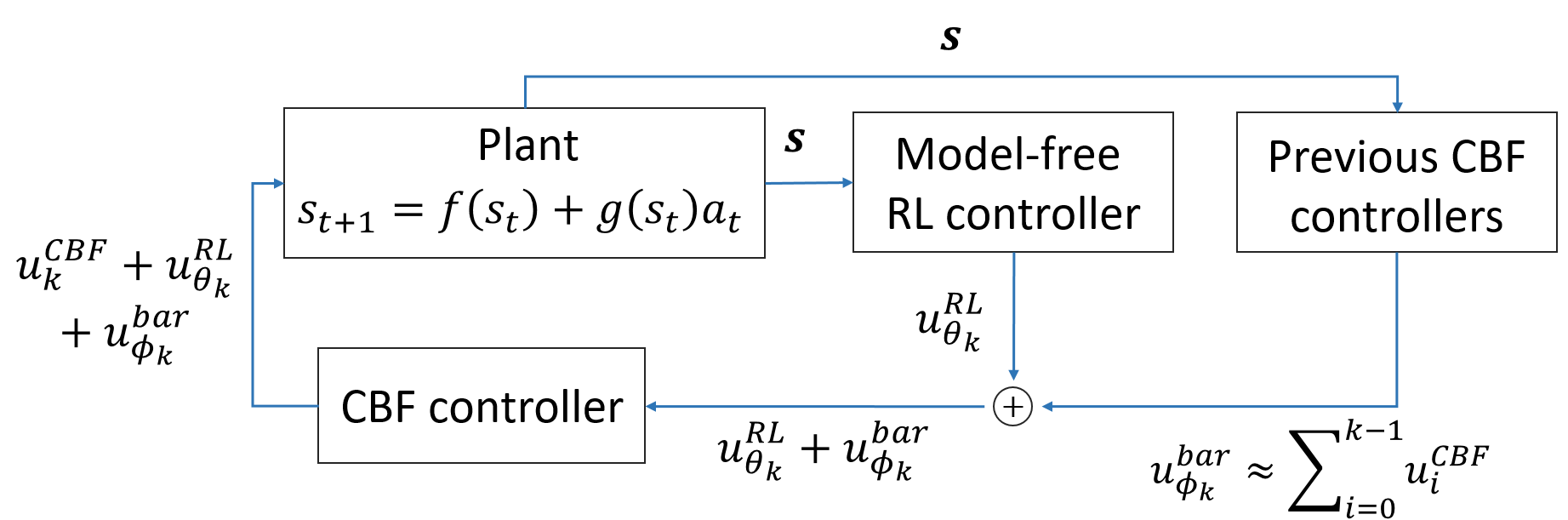}
		\caption{}
		\label{fig:architecture_2}
	\end{subfigure}
	\caption{Control architecture combining model-free RL controller with model-based CBF to guarantee safety. (a) Initial architecture that uses CBF to \textit{compensate} for unsafe control actions, but does not guide learning and exploration. (b) Architecture that uses CBF to guide exploration and learning, as well as ensure safety.}
	\label{fig:architecture}
\end{figure}

The model-free RL controller, $u^{RL}_{\theta_k}(s)$ proposes a control action that attempts to optimize long-term reward, but may be unsafe. Before deploying the RL controller, a CBF controller $u_k^{CBF}(s, u^{RL}_{\theta_k})$ filters the proposed control action and provides the \textit{minimum} control intervention needed to ensure that the overall controller, $u_k(s)$, keeps the system state within the safe set. Essentially, the CBF controller, $u_k^{CBF}(s, u^{RL}_{\theta_k})$ ``projects'' the RL controller $u_{\theta_k}^{RL}(s)$ into the set of safe policies. In the case of an autonomous car, this action may enforce a safe distance between nearby cars, regardless of the action proposed by the RL controller.

The CBF controller $u_k^{CBF}(s, u_{\theta_k}^{RL})$, which depends on the RL control, is defined by the following QP that can be efficiently solved at each time step:

\begin{equation}
\begin{aligned}
(a_t, \epsilon) = ~ & ~ \underset{a_t,\epsilon}{\text{argmin}} ~ \| a_t \|_2 + K_{\epsilon} \epsilon \\
\text{s.t.} & ~ ~ p^T f(s_t) + p^T g(s_t) \Big( u^{RL}_{\theta_k}(s_t) + a_t \Big) + p^T \mu_d(s_t) \\
~~~~~~~~~~~ & ~~~~~~~ - k_{\delta} |p|^T \sigma_d(s_t) + q \geq (1 - \eta ) h(s_t) - \epsilon \\
~~~~~~~~ & a^i_{low} \leq a_t^i + u^{RL (i)}_{\theta_k}(s_t) \leq a^i_{high} ~ \textnormal{for} ~ i = 1,...,M
\end{aligned}
\label{eq:barrier_control}
\end{equation}

\noindent
The last constraint in (\ref{eq:barrier_control}) incorporates possible actuator limits of the system. 

We must make clear the important distinction between the indexes $t$ and $k$. Note that $t$ indexes timesteps \textit{within} each policy iteration or trial, whereas $k$ indexes the policy iterations (which contain trajectories with several timesteps). The CBF controller updates throughout the task (computed at each time step, $t$), whereas the RL policy and GP model update at episodic policy iteration intervals indexed by $k$. 

Let $\epsilon^{max} = \max_{s \in \mathcal{C}}$ $\mathlarger\epsilon ~~ \textnormal{from (\ref{eq:barrier_control})}$ represent the largest violation of the barrier condition (i.e. potential deviation from the safe set) for any $s \in \mathcal{C}$. Lemma \ref{lemma:lemma_barrier_2} extends to the modified optimization problem (\ref{eq:barrier_control}), implying that $u_k = u_{\theta_k}^{RL} + u_k^{CBF}$ satisfies the barrier certificate inequality (up to $\epsilon^{max}$) that guarantees forward invariance of $\mathcal{C}$. Therefore, if there exists a solution to problem (\ref{eq:barrier_control}) such that $\epsilon^{max} = 0$, then controller (\ref{eq:framework}) renders the safe set $\mathcal{C}$ forward invariant with probability $(1-\delta)$. However if $\epsilon^{max} > 0$, but $\epsilon \leq \epsilon^{max}$ for all $s \in \mathcal{C}_{\epsilon}$, then the controller will render the set $\mathcal{C}_{\epsilon}$ forward invariant with probability $(1-\delta)$.

Intuitively, the RL controller provides a ``feedforward control'', and the CBF controller compensates with the minimum control necessary to render the safe set forward invariant. If such a control does not exist (e.g. due to torque constraints), then the CBF controller provides the control that keeps the state as close as possible to the safe set.

However, a significant issue is that controller (\ref{eq:framework}) ensures safety, but does not actively guide policy exploration of the overall controller. This is because the RL policy being updated around, $u_{\theta_k}^{RL}(s)$, is \textit{not} the policy deployed on the agent, $u_k(s)$. For example, suppose that in an autonomous driving task, the RL controller inadvertently proposes to collide with an obstacle. The CBF controller compensates to drive the car around the obstacle. The next learning iteration should update the policy around the safe deployed policy $u_k(s)$, rather than the unsafe policy $u_{\theta_k}^{RL}(s)$ (which would have led to an obstacle collision). However, the algorithm described in this section updates around the original policy, $u_{\theta_k}^{RL}(s)$, as illustrated in Figure \ref{fig:learning_process}a.

\section{CBF-Based Guiding Control with Reinforcement Learning} \label{barrier_guided}

In order to achieve safe \textit{and efficient} learning, we should learn from the deployed controller $u_k$, since it operates in the safe region $\mathcal{C}$, rather than learning around $u_{\theta_k}^{RL}$, which may operate in an unsafe, irrelevant area of state space. The \textit{RL-CBF} algorithm described below incorporates this goal.

Recall that $u_k, u_{\theta_k}^{RL}$ represent the realized controllers sampled from stochastic policies $\pi_k, \pi_{\theta_k}^{RL}$. Consider an initial RL-based controller $u_{\theta_0}^{RL}(s)$ (for iteration $k=0$). The CBF controller $u_0^{CBF}(s)$ is determined from (\ref{eq:barrier_control}) to obtain $u_0(s) = u_{\theta_0}^{RL}(s) + u_0^{CBF}(s)$. For every following policy iteration, let us define the overall controller to incorporate all previous CBF controllers, as in equation (\ref{eq:controllers_all}).

\begin{equation}
\begin{split}
&  u_k(s) = u_{\theta_k}^{RL}(s) + \sum_{j=0}^{k-1}u_j^{CBF}(s, u_{\theta_0}^{RL}, ..., u_{\theta_{j-1}}^{RL}) \\
&  ~~~~~~~~~~~ + u_k^{CBF}(s, u_{\theta_k}^{RL} + \sum_{j=0}^{k-1}u_j^{CBF}).
\end{split}
\label{eq:controllers_all}
\end{equation}

The dependence of controller (\ref{eq:controllers_all}) on all prior CBF controllers (see Figure \ref{fig:architecture_2}) is critical to enhancing learning efficiency. Defining the controller in this fashion leads to policy updates around the previously \textit{deployed} controller, which adds to the efficiency of the learning process by encouraging the policy to operate in desired areas of the state space. This idea is illustrated in Figure \ref{fig:learning_process}b. 

The intuition is that at iteration $k=0$, the RL policy \textit{proposed} actions $u_{\theta_0}^{RL}(s)$, but it \textit{took safe} actions $u_{\theta_0}^{RL}(s) + u_0^{CBF}(s)$. To update the policy based on the safe actions, the effective RL controller at the next iteration ($k=1$) should be $u_{\theta_1}^{RL}(s) + u_0^{CBF}(s)$, which is then filtered by the CBF controller $u_1^{CBF}(s)$ (i.e. $u_0^{CBF}(s)$ is now part of the RL controller). Across multiple policy iterations, we can consider $u_{\theta_k}^{RL}(s) + \sum_{j=0}^{k-1}u_j^{CBF}(s, u_{\theta_0}^{RL}, ..., u_{\theta_{j-1}}^{RL})$ to be the guided RL controller (proposing potentially unsafe actions), which is rendered safe by $u_k^{CBF}(s, u_{\theta_k}^{RL} + \sum_{j=0}^{k-1}u_j^{CBF})$.

\begin{figure}[!h]
\centering
\begin{subfigure}[b]{0.5\textwidth}
\includegraphics[width=0.9\columnwidth]{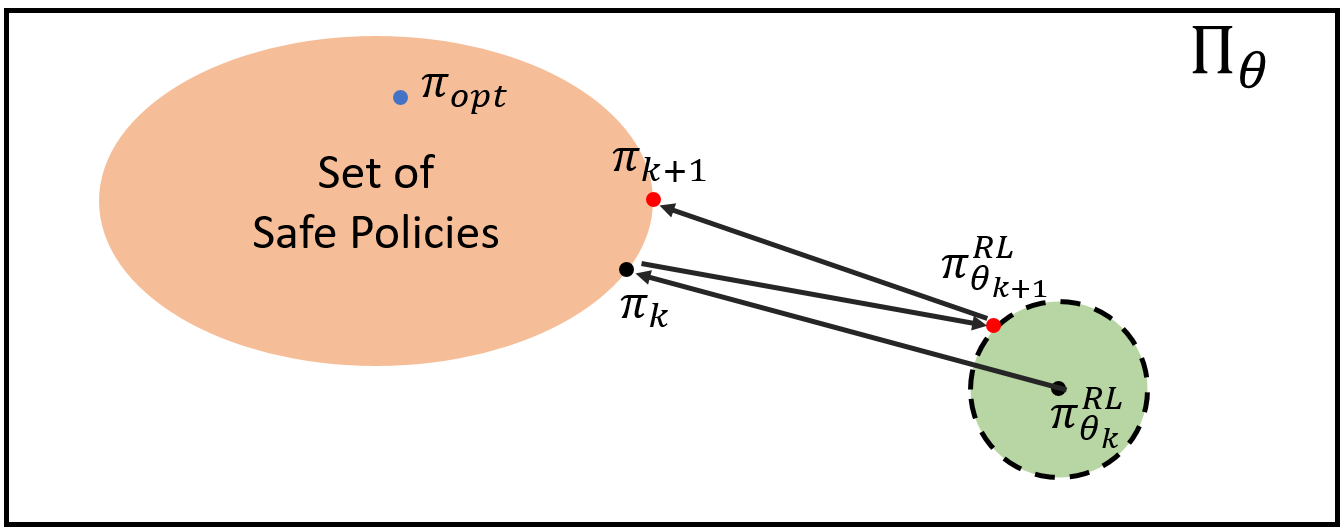}
\caption{}
\label{fig:learning_process_a} 
\end{subfigure}
\begin{subfigure}[b]{0.5\textwidth}
\includegraphics[width=0.9\columnwidth]{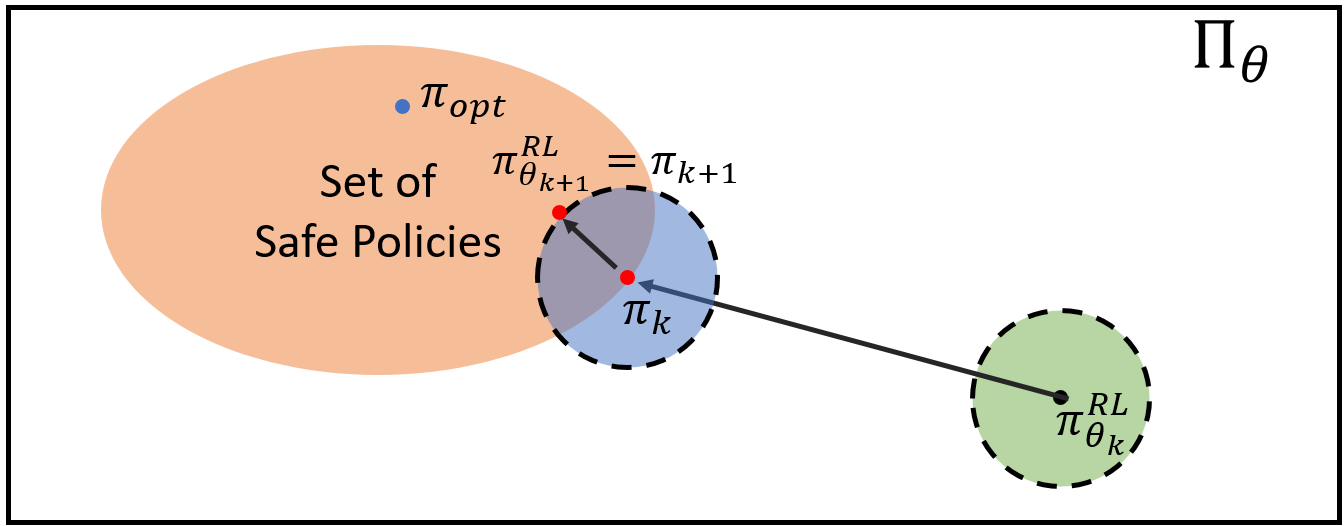}
\caption{}
\label{fig:learning_process_b}
\end{subfigure}
\caption{Illustration of policy iteration process, where we try to learn the optimal safe policy, $\pi_{opt}$. (a) Policy optimization with barrier-compensating controller. Next policy is updated around the previous RL controller, $\pi_{\theta_k}^{RL}$; (b) Policy optimization with barrier-guided controller. Next policy is updated around previous deployed controller, $\pi_k$.}
\label{fig:learning_process}
\end{figure}

To ensure safety after incorporating all prior CBF controllers, they must be included into the governing QP:

\begin{equation}
\begin{aligned}
& (a_t, \epsilon) = \underset{a_t,\epsilon}{\text{argmin}} ~ \| a_t \|_2 + K_{\epsilon} \epsilon \\
& ~ \text{s.t.} ~~~ p^T f(s_t) + p^T g(s_t) \Big( u_{\theta_k}^{RL}(s_t) + \sum_{j=0}^{k-1} u_j^{CBF}(s_t) + a_t \Big) \\
& ~~~~~~ + p^T \mu_d(s_t) - k_{\delta} |p|^T \sigma_d(s_t) + q  \geq (1 - \eta ) h(s_t) - \epsilon \\
& ~~~~~~~ a^i_{low} \leq a_t^i + u^{RL}(s_t) + \sum_{j=0}^{k-1} u_j^{CBF}(s_t) \leq a^i_{high}  \\
& ~~~~~~~~~~~~~~~~~~~~~~~~~~~~~~~~~~~~~~~~~~~~~~~~~~~~~~~~~~ \textnormal{for} ~ i = 1,...,M.
\end{aligned}
\label{eq:barrier_control_guide}
\end{equation}

\noindent
The solution to (\ref{eq:barrier_control_guide}) defines the CBF controller $u_k^{CBF}(s)$, which ensures safety by satisfying the barrier condition (\ref{eq:discrete_CBF}).

Let $\epsilon^{max} = \max_{s \in \mathcal{C}} \mathlarger\epsilon ~ \textnormal{from} ~ \textnormal{(\ref{eq:barrier_control_guide})}$ represent the largest violation of the barrier condition for any $s \in \mathcal{C}$.

\begin{theorem}	
	
	Using the control law $u_k(s)$ from (\ref{eq:controllers_all}), if there exists a solution to problem (\ref{eq:barrier_control_guide}) such that $\epsilon^{max} = 0$, then the safe set $\mathcal{C}$ is forward invariant with probability $(1-\delta)$. If $\epsilon^{max} > 0$, but the solution to problem (\ref{eq:barrier_control_guide}) satisfies $\epsilon \leq \epsilon^{max}$ for all $s \in \mathcal{C}_{\epsilon}$, then the controller will render the larger set $\mathcal{C}_{\epsilon}$ forward invariant with probability $(1-\delta)$.
	
	Furthermore, if we use TRPO for the RL algorithm, then the control law $u_k^{prop}(s) = u_k(s) - u_k^{CBF}(s)$ from (\ref{eq:controllers_all}) achieves the performance guarantee $J({\pi_{k}^{prop}}) \geq J({\pi_{k-1}}) - \frac{2 \lambda \gamma}{(1-\gamma)^2} \delta_{\pi}$, where $\lambda = \max_s | \mathbb{E}_{a \sim \pi_{k}^{prop}} [ A_{\pi_{k-1}}(s,a)]|$ and $\delta_{\pi}$ is chosen as in equation (\ref{eq:trpo_11}).
	\label{theorem:trpo_cbf}
\end{theorem}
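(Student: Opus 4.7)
The theorem splits naturally into a safety claim and a performance claim, and I would tackle them separately.

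For the safety claim, I would simply observe that problem (\ref{eq:barrier_control_guide}) is structurally identical to problem (\ref{eq:optimization_GP}) once we identify the decision variable $a_t$ with $u_k^{CBF}(s_t)$ and interpret the sum $u_{\theta_k}^{RL}(s_t)+\sum_{j=0}^{k-1}u_j^{CBF}(s_t)$ as a known, state-dependent offset that is absorbed into the ``nominal feed-forward'' term on the left-hand side of the constraint. The constraint in (\ref{eq:barrier_control_guide}) is therefore the CBF inequality (\ref{eq:discrete_CBF}) evaluated at the full deployed action $u_k(s_t)$ in (\ref{eq:controllers_all}), modulo the GP-uncertainty correction $k_\delta |p|^T \sigma_d(s_t)$ that, by (\ref{eq:GP_bounds}), holds with probability $1-\delta$. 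Thus the two forward-invariance statements (for $\mathcal{C}$ when $\epsilon^{\max}=0$ and for $\mathcal{C}_\epsilon$ when $\epsilon^{\max}>0$) follow immediately from Lemma \ref{lemma:lemma_barrier_2}, whose proof carries over verbatim.

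For the performance claim, the plan is to reduce to the standard TRPO monotonic-improvement inequality, which states that if two stochastic policies $\pi,\tilde\pi$ satisfy $D_{KL}(\pi,\tilde\pi)\le \delta_\pi$ then $J(\tilde\pi)\ge J(\pi)-\tfrac{2\lambda\gamma}{(1-\gamma)^2}\delta_\pi$ with $\lambda=\max_s|\mathbb{E}_{a\sim\tilde\pi}[A_\pi(s,a)]|$. I would take $\pi=\pi_{k-1}$ (the previously deployed stochastic policy) and $\tilde\pi=\pi_k^{prop}$ (the proposed policy associated with $u_k^{prop}(s)=u_{\theta_k}^{RL}(s)+\sum_{j=0}^{k-1}u_j^{CBF}(s)$), and show the resulting KL bound.

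The key observation linking the two is that, by construction of (\ref{eq:controllers_all}), both $u_{k-1}$ and $u_k^{prop}$ share the \emph{same} deterministic shift $\sum_{j=0}^{k-1}u_j^{CBF}(s)$; they differ only in the underlying RL component $u_{\theta_{k-1}}^{RL}$ vs.\ $u_{\theta_k}^{RL}$. Because TRPO enforces $D_{KL}(\pi_{\theta_{k-1}}^{RL},\pi_{\theta_k}^{RL})\le \delta_\pi$ by (\ref{eq:trpo_11}), and KL divergence is invariant under translation of both arguments of the action distribution by the same state-dependent deterministic function (a change-of-variables whose Jacobian is the identity), we obtain $D_{KL}(\pi_{k-1},\pi_k^{prop})\le\delta_\pi$. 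Plugging this into the TRPO bound yields the stated inequality $J(\pi_k^{prop})\ge J(\pi_{k-1})-\tfrac{2\lambda\gamma}{(1-\gamma)^2}\delta_\pi$.

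The main obstacle, and the step I would write out carefully, is the translation-invariance of the KL divergence: I would verify it by writing $\pi_k^{prop}(a\mid s)=\pi_{\theta_k}^{RL}(a-c_k(s)\mid s)$ with $c_k(s)=\sum_{j=0}^{k-1}u_j^{CBF}(s)$ and the analogous relation for $\pi_{k-1}$, then performing the substitution $a'=a-c_k(s)$ inside the $\int \pi_{k-1}(a\mid s)\log\bigl(\pi_{k-1}(a\mid s)/\pi_k^{prop}(a\mid s)\bigr)\,da$ integrand to reduce it to $D_{KL}(\pi_{\theta_{k-1}}^{RL},\pi_{\theta_k}^{RL})$ pointwise in $s$, and therefore also in expectation over $\rho_{\pi_{k-1}}$. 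Once this reduction is established, the rest of the performance claim is a direct invocation of the monotonic-improvement theorem of TRPO applied to the pair $(\pi_{k-1},\pi_k^{prop})$.
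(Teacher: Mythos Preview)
Your proposal is correct and follows essentially the same route as the paper: the safety claim by reducing problem (\ref{eq:barrier_control_guide}) to Lemma~\ref{lemma:lemma_barrier_2}, and the performance claim by writing $\pi_{k-1}$ and $\pi_k^{prop}$ as the \emph{same} deterministic shift $\sum_{j=0}^{k-1}u_j^{CBF}(s)$ of $\pi_{\theta_{k-1}}^{RL}$ and $\pi_{\theta_k}^{RL}$, then using translation-invariance of the divergence to reduce to the TRPO constraint on the RL policies. The only cosmetic differences are that the paper carries out the argument with $D_{TV}$ rather than $D_{KL}$ (both are shift-invariant for the same change-of-variables reason you give), and that it makes explicit the step you absorb into your black-box ``monotonic-improvement'' invocation---namely that the surrogate advantage term is nonnegative because $\pi_{k-1}$ is itself feasible in problem~(\ref{eq:trpo_11}) with objective value zero, so the TRPO optimizer $\pi_k^{prop}$ can only do at least as well.
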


\begin{proof}
	The first part of the theorem follows directly from Definition \ref{def:CBF} and Lemma \ref{lemma:lemma_barrier_2}. The only difference from Lemma \ref{lemma:lemma_barrier_2} is that the control includes the RL controller and all \textit{previous} CBF controllers ($u^{CBF}_0,...,u^{CBF}_{k-1}$). 
	
	The proof of the performance bound is given in the Appendix of this paper found at ~ \url{https://rcheng805.github.io/files/aaai2019.pdf}.
\end{proof}

RL-CBF provides high-probability safety guarantees during the learning process \textit{and} can maintain the performance guarantees of TRPO. If we have no uncertainty in the dynamics, then safety is guaranteed with probability 1.  Note that the performance guarantee in Theorem \ref{theorem:trpo_cbf} is for control law $u_k(s) - u_k^{CBF}(s)$, which is not the deployed controller, $u_k(s)$. However, this does not pose a significant issue, since $u_k^{CBF}(s)$ rapidly decays to 0 as we iterate. This is because the guided RL controller quickly learns to operate in the safe region, so the CBF controller $u_k^{CBF}(s)$ becomes inactive.

\section{Computationally Efficient Algorithm}

This section describes an efficient algorithm to implement the framework described above, since a naive approach would be too computationally expensive in many cases. To see this, recall the controller (\ref{eq:controllers_all}) we would ideally implement:
\begin{equation*}
\begin{split}
&  u_k(s) = u_{\theta_k}^{RL}(s) + \sum_{j=0}^{k-1}u_j^{CBF}(s, u_{\theta_0}^{RL}, ..., u_{\theta_{j-1}}^{RL}) \\
&  ~~~~~~~~~~~ + u_k^{CBF}(s, u_{\theta_k}^{RL} + \sum_{j=0}^{k-1}u_j^{CBF}).
\end{split}
\label{eq:controller_guided_1}
\end{equation*}

The first term may be represented by a neural network that is parameterized by $\theta_k$, which has a standard implementation. The third term is just a quadratic program with dependencies on the other terms; it does not pose a computational burden. \textit{However}, the summation in the 2nd term poses a challenge, since \textit{every} term in $\sum_{j=0}^{k-1}u_j^{CBF}(s, u_{\theta_0}^{RL}, ..., u_{\theta_{j-1}}^{RL})$ depends on a different previous RL controller $u_{\theta_j}^{RL}$. Therefore, we would need to store $k-1$ neural networks corresponding to each previous RL controller. In addition, we would have to solve $k-1$ separate QPs in sequence to evaluate each CBF controller. Such a brute-force implementation would be impractical .

To overcome this issue, we approximate $u^{bar}_{\phi_k}(s) \approx \sum_{j=0}^{k-1}u_j^{CBF}(s, u_{\theta_0}^{RL}, ..., u_{\theta_{j-1}}^{RL})$, where $u^{bar}_{\phi_k}$ is a feedforward neural network (MLP) parameterized by $\phi$. We chose a MLP since they have been shown to be powerful function approximators. Thus, at each policy iteration, we fit the MLP $u^{bar}_{\phi_k}(s)$ to data of $\sum_{j=0}^{k-1}u_j^{CBF}(s, u_{\theta_0}^{RL}, ..., u_{\theta_{j-1}}^{RL})$ collected from trajectories of the previous policy iteration. Then we obtain the controller:
  \begin{equation*}
  \begin{split}
  & \hspace*{-0.07cm} u_k(s) = u_{\theta_k}^{RL}(s) + u^{bar}_{\phi_k}(s) + u_k^{CBF}(s, u_{\theta_k}^{RL}+u^{bar}_{\phi_k}) .
  \end{split}
  \label{eq:controller_guided_2}
  \end{equation*}
  
  Note that even with this approximation, \textit{safety with probability $(1-\delta)$ is still guaranteed}. This is because the above approximation only affects the guided RL term $u_{\theta_k}^{RL}(s) + \sum_{j=0}^{k-1}u_j^{CBF}(s, u_{\theta_0}^{RL}, ..., u_{\theta_{j-1}}^{RL})$. The CBF controller $u_k^{CBF}(s, u_{\theta_k}^{RL}+u^{bar}_{\phi})$ still solves (\ref{eq:barrier_control_guide}), which provides the safety guarantees in Theorem \ref{theorem:trpo_cbf} by satisfying the CBF condition (\ref{eq:discrete_CBF}). Furthermore, we now have to store only two NNs and solve one QP for the controller. The tradeoff is that the performance guarantee in Theorem \ref{theorem:trpo_cbf} does not necessarily hold with this approximation. The algorithm is outlined in Algorithm \ref{alg:rl-cbf}.
  
  \begin{algorithm}[!h]
  	\caption{RL-CBF algorithm}\label{alg:rl-cbf}
  	\begin{algorithmic}[1]
  		\State Initialize RL Policy $\pi^{RL}_0$, state $s_0 \sim \rho_0$, 
  		\Statex ~~~~~~~~ measurement array $\hat{D}$, action array $\hat{A}$
  		\For {$t = 1,\ldots,T$}
  		\State Sample (but do not deploy) control $u^{RL}_{\theta_0} (s_t)$
  		\State Solve for $u^{CBF}_0(s_t)$ from optimization problem (\ref{eq:barrier_control_guide})
  		\State Deploy controller $u_0(s_t) = u_{\theta_0}^{RL}(s_t) + u^{CBF}_0(s_t)$
  		\State Store state-action pair $(s_t, u_0^{CBF})$ in $\hat{A}$ 
  		\State Observe $(s_t, u_0, s_{t+1},r_t)$ and store in $\hat{D}$
  		\EndFor
  		\State Collect Episode Reward, $~ \sum_{t=1}^T r_t$
  		\State Update GP model using (\ref{eq:GP_model}) and measurements $\hat{D}$ 
  		\State Set $k=1$ (representing $k^{th}$ policy iteration)
  		\While {$k < ~ $Episodes}
  		\State{Do policy iteration using RL algorithm based on }
  		\Statex{~~~~~ previously observed episode/rewards to obtain $\pi_{\theta_k}^{RL}$}
  		\State Train $u_{\phi_k}^{bar}$ to approximate prior CBF controllers \Statex $~~~~~~~~~~~~~~~~~~~~~~~~~ ( u_{\phi_k}^{bar} = u_0^{CBF} + ...  u_{k-1}^{CBF})$ using $\hat{A}$
  		\State Initialize state $s_0 \sim \rho_0$
  		\For {$t = 1,\ldots,T$}
  		\State Sample control $u_{\theta_k}^{RL}(s_t) + u_{\phi_k}^{bar}(s_t)$
  		\State Solve for $u^{CBF}_k(s_t)$ from problem (\ref{eq:barrier_control_guide})
  		\State Deploy controller $u_k(s_t) = u_{\theta_k}^{RL}(s_t) $ 
  		\Statex $ ~~~~~~~~~~~~~~~~~~~~~~~~~~~~~~~~~~~~~~~~~~~~~~~ + u_{\phi_k}^{bar}(s_t) + u^{CBF}_k(s_t)$. 
  		\State Store state-action pair ($s_t,u_{\phi_k}^{bar} + u_k^{CBF}$) in $\hat{A}$
  		\State Observe $(s_t, u_k, s_{t+1}, r_t)$ and store in $\hat{D}$
  		\EndFor
  		\State Collect Episode Reward, $~ \sum_{t=1}^T r_t$
  		\State Update GP model using (\ref{eq:GP_model}) and measurements $\hat{D}$ 
  		\State $k = k + 1$
  		\EndWhile
  		\State \textbf{return} $\pi_{\theta_k}^{RL}, u_{\phi_k}^{bar}, u_k^{CBF}$ 
  		\Statex ~~~~~~~~~~~~~~~~~~~~~~~~~~~~~~~~~~~ $\rhd$ Overall controller composed
  		\Statex ~~~~~~~~~~~~~~~~~~~~~~~~~~~~~~~~~~~~~~~~ from all 3 subcomponents
  	\end{algorithmic}
  \end{algorithm}

\section{Experiments}

We implement two versions of the RL-CBF algorithm with existing model-free RL algorithms: TRPO-CBF, derived from TRPO \cite{Schulman2015}, and DDPG-CBF, derived from DDPG  \cite{Lillicrap2015}. The code for these examples can be found at: \url{https://github.com/rcheng805/RL-CBF}.

\subsection{Inverted Pendulum}

We first apply RL-CBF to the control of a simulated inverted pendulum from the OpenAI gym environment (\textit{pendulum-v0}), which has mass $m$ and length, $l$, and is actuated by torque, $u$. We set the safe region to be $\theta \in [-1, 1]$ radians, and define the reward function $r = \theta^2 + 0.1 \dot{\theta}^2 + 0.001 u^2$ to learn a controller that keeps the pendulum upright. The true system dynamics are defined as follows,
\begin{equation}
\begin{split}
& \theta_{t+1} = \theta_t + \dot{\theta_t} \delta t + \frac{3 g}{2 l} \sin(\theta_t) \delta t^2 + \frac{3}{m l^2} u \delta t^2 , \\
& \dot{\theta}_{t+1} = \dot{\theta_t} + \frac{3 g}{2 l} \sin(\theta_t) \delta t + \frac{3}{m l^2} u \delta t ,
\end{split}
\end{equation}
 
 \noindent
 with torque limits $u \in [-15, 15]$, and $m = 1, ~ l = 1$. To introduce model uncertainty, our nominal model assumes $m=1.4, ~ l=1.4$ ($40\%$ error in model parameters).

Figure \ref{fig:pendulum_reward} compares the accumulated reward achieved during each episode using TRPO, DDPG, TRPO-CBF, and DDPG-CBF. The two RL-CBF algorithms converge near the optimal solution very rapidly, and significantly outperform the corresponding baseline algorithms without the CBFs. We note that TRPO and DDPG \textit{sometimes} converge on a high-performance controller (comparable to TRPO-CBF and DDPG-CBF), though this occurs less reliably and more slowly, resulting in the poorer learning curves. More importantly, the RL-CBF controllers maintain safety (i.e. never leave the safe region) throughout the learning process, as also seen in Figure \ref{fig:pendulum_reward}. In contrast, TRPO and DDPG severely violate safety while learning the optimal policy.

Figure \ref{fig:pendulum_angle} shows the pendulum angle during a representative trial under the first policy versus the last learned policy deployed for TRPO-CBF and DDPG-CBF. For the first policy iteration, the pendulum angle is maintained near the edge of the safe region -- the RL algorithm has proposed a poor controller so the CBF controller takes the minimal action necessary to keep the system safe. By the last iteration though, the CBF controller is completely inactive ($u^{CBF} = 0$), since the guided RL controller ($u^{RL}_{\theta_k}(s) + u^{bar}_{\phi_k}(s)$) is already safe.

\begin{figure}[!h]
	\centering
	\includegraphics[width=0.95\columnwidth]{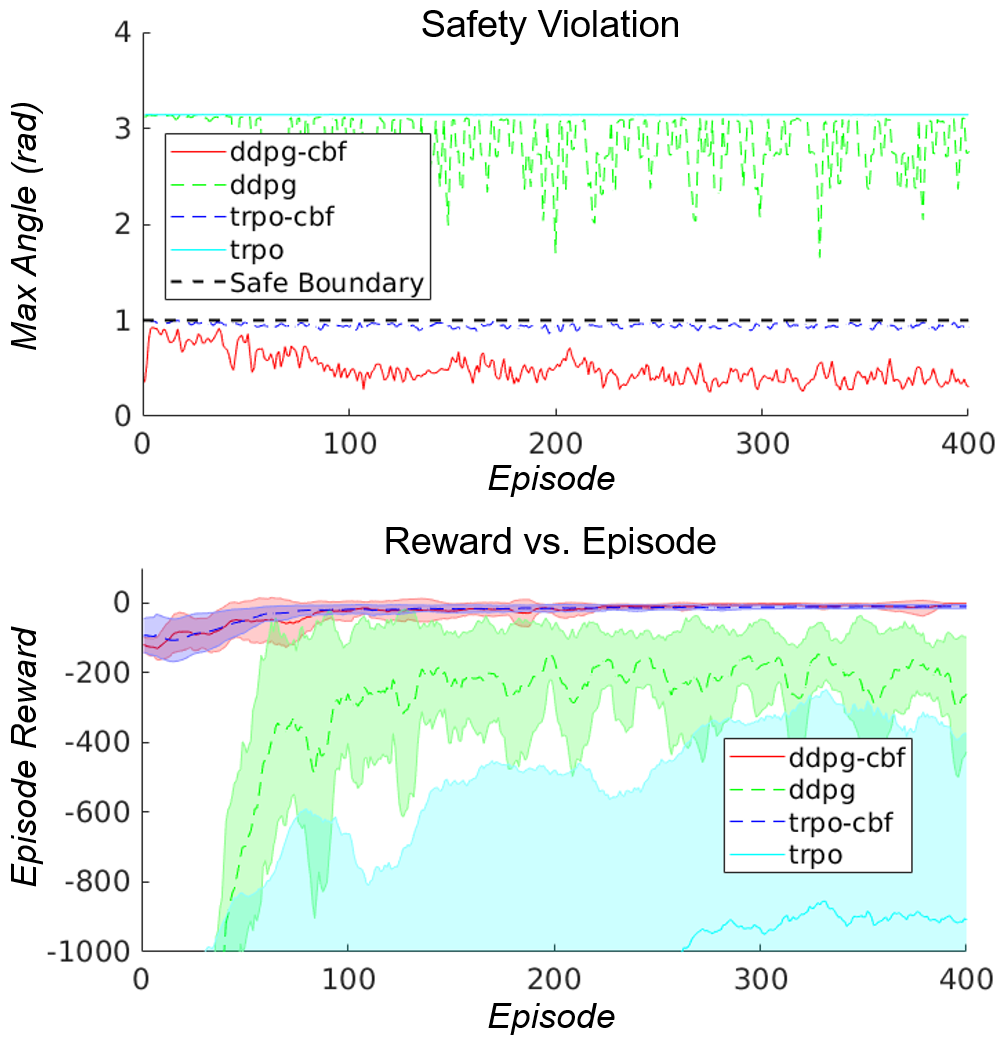}
	\caption{(Top) Maximum angle (rad) of the pendulum throughout each episode. Values above the dashed black line represent exits from the safe set at some point during the episode. (Bottom) Comparison of accumulated reward from inverted pendulum problem using TRPO, DDPG, TRPO-CBF, and DDPG-CBF.}
	\label{fig:pendulum_reward}
\end{figure}

\begin{figure}[!h]
	\includegraphics[width=0.95\columnwidth]{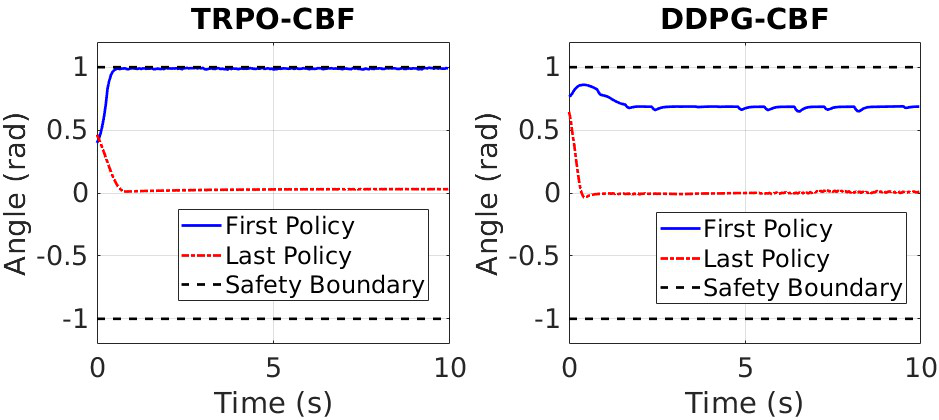}
	\caption{Representative pendulum trajectory (angle vs. time) using first policy vs last policy. The left plot and right plot show results from TRPO-CBF and DDPG-CBF, respectively. The trajectory for the first policy (blue) goes to edge of the safe region and stays there, while the trajectory for the last policy (red) quickly converges to the upright position.}
	\label{fig:pendulum_angle}
\end{figure}

\subsection{Simulated Car Following}

Consider a chain of five cars following each other on a straight road. We control the acceleration/deceleration of the $4^{th}$ car in the chain, and would like to train a policy to maximize fuel efficiency during traffic congestion while avoiding collisions. Each car utilizes the dynamics shown in equation (\ref{eq:car_dynamics}), and we attempt to optimize the reward function (\ref{eq:car_reward}). The car dynamics and reward function are inspired by previous work \cite{He2018}.
\begin{equation}
\begin{bmatrix}
\dot{s}^{(i)} \\
\dot{v}^{(i)} 
\end{bmatrix}
=
\begin{bmatrix}
0 & 1 \\
0 & -k_d 
\end{bmatrix}
\begin{bmatrix}
s^{(i)}  \\
v^{(i)}
\end{bmatrix}
+
\begin{bmatrix}
0 \\
1 
\end{bmatrix}
a
~~~~~~~ k_d = 0.1.
\label{eq:car_dynamics}
\end{equation}

\begin{equation}
\begin{split}
& r = -\sum_{t=1}^T \Big[ v^{(4)}_t \max((a^{(4)}_t), 0) + \sum_{i=3}^4 G_i \Big( \frac{500}{s_t^{(i)} - s_t^{(i+1)}} \Big) \Big] , \\
& ~ G_m(x) = \begin{cases}
\ |x| &\text{if $s^{(m)} - s^{(m+1)} \leq 3$}\\
0 &\text{otherwise} \\
\end{cases}
\label{eq:car_reward}
\end{split}
\end{equation}

The first term in the reward optimizes fuel efficiency, while the other term encourages the car to maintain a 3 meter distance from the other cars (soft constraint). For the RL-CBF controllers, the CBF enforces a 2 meter safe distance between cars (hard constraint). The behavior of cars 1,2,3, and 5 is described in the Appendix.

The $4^{th}$ car has access to every other cars' position, velocity, and acceleration, but it only has a crude model of its own dynamics ($k_d = 0$) and an inaccurate model of the drivers behind and in front of it. In addition, we add Gaussian noise to the acceleration of each car. The idea is that the $4^{th}$ car can use its crude model to guarantee safety with high probability, \textit{and} improve fuel efficiency by slowly building and leveraging an implicit model of the other drivers' behaviors.

From Figure \ref{fig:car_reward}, we see that there were no safety violations between the cars during our simulated experiments when using either of the RL-CBF controllers. When using TRPO and DDPG alone without CBF safety, almost all trials had collisions, even in the later stages of learning. Furthermore, as seen in Figure \ref{fig:car_reward}, TRPO-CBF learns faster and outperforms TRPO (DDPG-CBF also outperforms DDPG though neither algorithm converged on a high-performance controller in our experiments). It is important to note that in \textit{some} experiments, TRPO finds a comparable controller to TRPO-CBF, but this is often not the case due to randomness in seeds. 

Although DDPG and DDPG-CBF failed to converge on a good policy, Figure \ref{fig:car_reward} shows that DDPG-CBF (and TRPO-CBF) always maintained a safe controller. This is a crucial benefit of the RL-CBF approach, as it guarantees safety independent of the system's learning performance.

\begin{figure}[!h]
	\includegraphics[width=0.95\columnwidth]{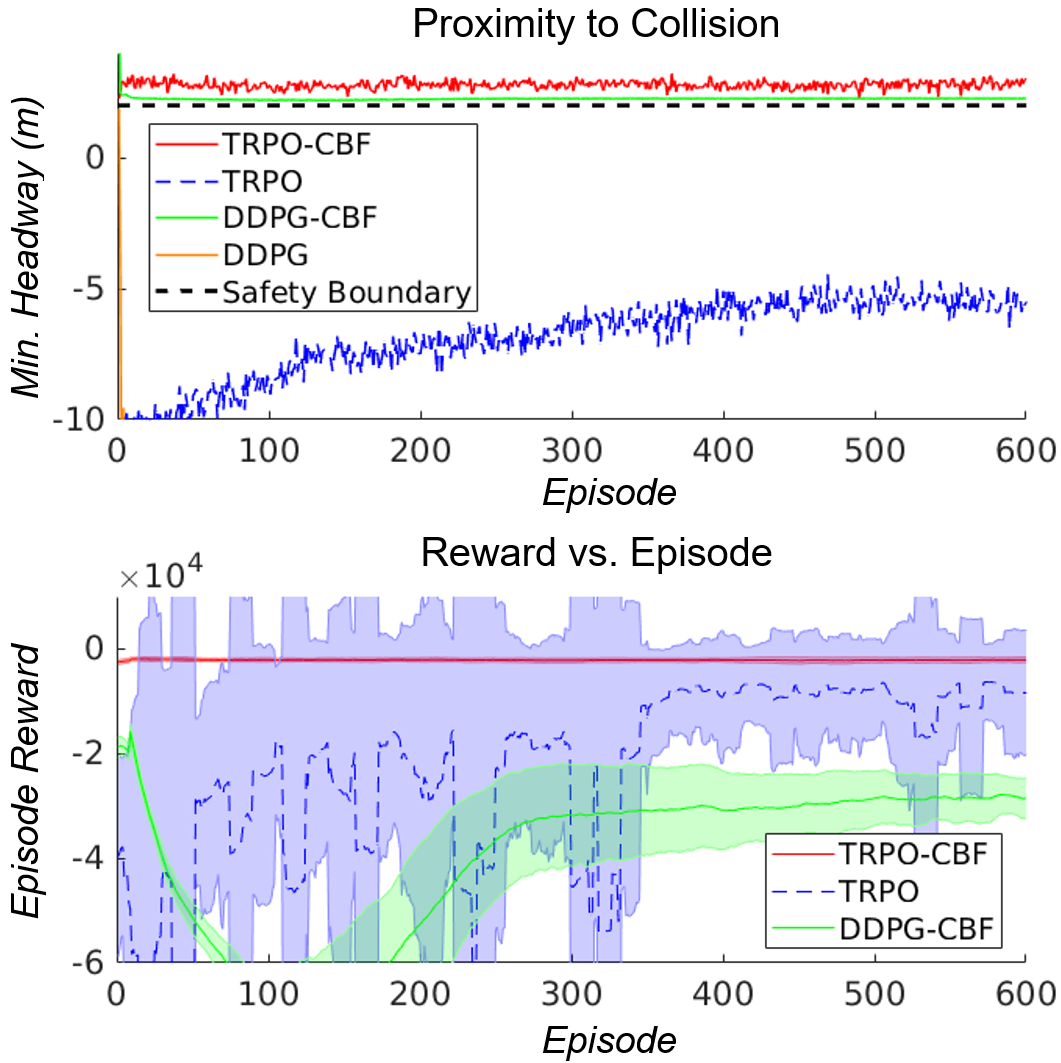}
	\caption{(Top) Minimum headway between cars during each learning episode using DDPG, TRPO, DDPG-CBF, and TRPO-CBF. Values below the dashed black line represent exits from the safe set, and values below 0 represent collisions. The curve for DDPG has high negative values throughout learning, and is not seen. (Bottom) Comparison of reward over multiple episodes from car-following problem using TRPO, TRPO-CBF, and DDPG-CBF (DDPG is excluded because it exhibits very poor performance).}
	\label{fig:car_reward}
\end{figure}

\section{Conclusion}

Adding even crude model information and CBFs into the model-free RL framework allows us to improve the exploration of model-free learning algorithms while ensuring end-to-end safety. Therefore, we proposed the safe RL-CBF framework, and developed an efficient controller synthesis algorithm that guarantees safety \textit{and} improves exploration. These features will be crucial in deploying reinforcement learning on physical systems, where problems require online computation and efficient learning with safety guarantees.

This framework, which combines model-free RL-based control, model-based CBF control, and model learning has the additional advantages of being able to (1) easily integrate new RL algorithms (in place of TRPO/DDPG) as they are developed, and (2) incorporate better model information from measurements to \textit{online} improve the CBF controller.

A significant assumption in this work is that we are given a valid safe set, $h(s)$, which can be rendered forward invariant. However, computing these valid safe sets is non-trivial and computationally intensive \cite{Wang2017,Wabersich2018,Fisac2018}. If we are \textit{not} given a valid safe set, we may reach states where it is not possible to remain safe (i.e. $\epsilon^{max} \geq 0$). Although our controller achieves graceful degradation in these cases, in future work it will be important to learn the safe set in addition to the controller.

\section*{Acknowledgment}
The authors would like to thank Hoang Le and Yisong Yue for helpful discussions.

\bibliographystyle{aaai}
\bibliography{references}

\pagebreak

\begin{appendices}

	\section{Appendix A: Proof of Theorem 2}
	
	\addtocounter{theorem}{-1}
	\addtocounter{equation}{0}
	
	\begin{theorem}
		Using the control law $u_k(s)$ from (15), if there exists a solution to problem (16) such that $\epsilon^{max} = 0$, then the safe set $\mathcal{C}$ is forward invariant with probability $(1-\delta)$. If $\epsilon^{max} > 0$, but the solution to problem (16) satisfies $\epsilon \leq \epsilon^{max}$ for all $s \in \mathcal{C}_{\epsilon}$, then the controller will render the set $\mathcal{C}_{\epsilon}$ forward invariant with probability $(1-\delta)$.
		
		Furthermore, if we use TRPO for the RL algorithm, then the control law $u_k^{prop}(s) = u_k(s) - u_k^{CBF}(s)$ from (15) achieves the performance guarantee $J({\pi_{k}^{prop}}) \geq J({\pi_{k-1}}) - \frac{2 \lambda \gamma}{(1-\gamma)^2} \delta_{\pi}$, where $\lambda = \max_s | \mathbb{E}_{a \sim \pi_{k}^{prop}} [ A_{\pi_{k-1}}(s,a)]|$ and $\delta_{\pi}$ is chosen as in equation (4).
		\label{theorem:trpo_cbf}
	\end{theorem}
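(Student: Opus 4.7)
The plan is to handle the safety claim by direct reduction to Lemma~1 and the performance claim by a surrogate-objective argument that identifies $\pi_k^{prop}$ as a TRPO-style trust-region update of $\pi_{k-1}$. I start with safety. The deployed controller in (15) factors as $u_k(s_t) = \tilde u_k(s_t) + u_k^{CBF}(s_t)$, where $\tilde u_k(s_t) := u_{\theta_k}^{RL}(s_t) + \sum_{j<k} u_j^{CBF}(s_t)$ is determined by prior iterations. Substituting $\tilde u_k$ into the drift term of the general QP~(10) reproduces the QP~(16) exactly, so the constraint of (16) is the CBF condition~(9) applied to the composite action $u_k$. The high-probability GP bound~(6) then gives $h(s_{t+1}) \geq (1-\eta) h(s_t) - \epsilon$ with probability $1-\delta$, and the forward-invariance claims for $\mathcal{C}$ and $\mathcal{C}_\epsilon$ drop out of the telescoping argument on $h_\epsilon(s) = h(s) + \epsilon/\eta$ already carried out in the proof of Lemma~1.

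For the performance bound I compare $\pi_k^{prop}$ to the previously deployed policy $\pi_{k-1}$. Writing $c_k(s) := \sum_{j=0}^{k-1} u_j^{CBF}(s)$, (15) gives $u_{k-1}(s) = u_{\theta_{k-1}}^{RL}(s) + c_k(s)$ and $u_k^{prop}(s) = u_{\theta_k}^{RL}(s) + c_k(s)$, so the two stochastic policies are translations of the underlying RL policies by the \emph{same} deterministic, state-dependent offset. For the Gaussian action-noise parameterisation used by TRPO, KL divergence is invariant under a common shift of the mean, so for every $s$,
\[
D_{KL}\bigl(\pi_{k-1}(\cdot\mid s),\pi_k^{prop}(\cdot\mid s)\bigr) = D_{KL}\bigl(\pi_{\theta_{k-1}}^{RL}(\cdot\mid s),\pi_{\theta_k}^{RL}(\cdot\mid s)\bigr).
\]
The TRPO trust-region constraint in (4) therefore transfers verbatim to $D_{KL}(\pi_{k-1},\pi_k^{prop}) \leq \delta_\pi$, and by the same translation the TRPO update is equivalent to $\pi_k^{prop} = \argmax_\pi L_{\pi_{k-1}}(\pi)$ subject to that KL constraint, with $\pi_{k-1}$ itself feasible. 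Hence $L_{\pi_{k-1}}(\pi_k^{prop}) \geq L_{\pi_{k-1}}(\pi_{k-1}) = J(\pi_{k-1})$. Applying Schulman's surrogate-objective inequality with $\pi_{\mathrm{old}} = \pi_{k-1}$ and $\pi_{\mathrm{new}} = \pi_k^{prop}$ then yields
\[
J(\pi_k^{prop}) \geq L_{\pi_{k-1}}(\pi_k^{prop}) - \frac{2\lambda\gamma}{(1-\gamma)^2}\delta_\pi,
\]
with $\lambda = \max_s|\mathbb{E}_{a\sim\pi_k^{prop}}[A_{\pi_{k-1}}(s,a)]|$ matching the constant in the theorem. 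Chaining the two inequalities gives the stated performance guarantee.

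The main obstacle is the KL-invariance identity. It holds cleanly for location-family policies with state-dependent mean and fixed (or mean-independent) covariance, which is the standard TRPO parameterisation, but would fail in general if the noise structure depended on the policy mean. I would therefore state the Gaussian-policy assumption explicitly up front, and also note that the $u_j^{CBF}$ appearing in $c_k$ must be treated as deterministic functions of $s$ for this argument (consistent with their definition as solutions of (16) for fixed prior parameters). A secondary subtlety is that Schulman's bound is sometimes stated using $\max_s D_{KL}$ while (4) bounds the average KL; since the identity above is pointwise in $s$, either form of the bound propagates without loss, so I would adopt whichever convention matches the precise version of the cited theorem.
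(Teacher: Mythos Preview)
Your proof is correct and follows essentially the same route as the paper's: safety by direct reduction to Lemma~1, and performance by observing that $\pi_{k-1}$ and $\pi_k^{prop}$ are the RL policies $\pi_{\theta_{k-1}}^{RL},\pi_{\theta_k}^{RL}$ translated by the \emph{same} deterministic offset $\sum_{j\le k-1}u_j^{CBF}$, then combining shift-invariance of the divergence with TRPO's feasibility/optimality and Schulman's surrogate bound. The only cosmetic difference is that the paper uses shift-invariance of the total variation distance rather than KL; since both invariances follow from a change of variables $a\mapsto a-c_k(s)$ for arbitrary action distributions, your Gaussian/location-family caveat is actually unnecessary.
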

	
	\begin{proof}	
		To prove the performance bound in the second part of the theorem, we use the property of the advantage function from equation (\ref{eq:trpo_1}) below:
		
		\begin{equation}
		\begin{split}
		& J(\pi_{k}) = J(\pi_{k-1}) + \mathbb{E}_{\tau \sim \pi_{k}} \Big[ \sum_{t=0}^{\infty} \gamma^t A_{\pi_{k-1}}(s_t,a_t) \Big],
		\end{split}
		\label{eq:trpo_1}
		\end{equation}
		
		\noindent
		where $s_{t+1} \sim P(s_{t+1} | s_t, a_t)$.  As derived in (Schulman et al. 2015), we can then obtain the following inequality:

		\begin{equation}
		\begin{split}
		& J(\pi_{k}) \geq J(\pi_{k-1}) + \frac{1}{1-\gamma} \mathbb{E}_{\substack{s_t \sim \pi_{k-1} \\ a_t \sim \pi_{k}}} \Big[ \sum_{t=0}^{\infty} \gamma^t A_{\pi_{k-1}}(s_t,a_t) \\
		& ~~~~~~~~~~~~~~~~~~~~~~~~~~~~ - \frac{2 \gamma \lambda}{1-\gamma} D_{TV} (\pi_{k-1}, \pi_{k})\Big],
		\end{split}
		\label{eq:trpo_proof_1}
		\end{equation}
		
		\noindent
		where $D_{TV}(\pi_{k-1}, \pi_{k})$ is the total variational distance between policies $\pi_{k-1}$ and $\pi_{k}$, and $\lambda = \max_s | \mathbb{E}_{a \sim \pi_{k}} [ A_{\pi_{k-1}}(s,a)]|$. Note that our CBF controllers are all deterministic, so we can redefine $u_{k-1}^{barrier} = \sum_{j=0}^{k-2} u_j^{CBF} + u_{k-1}^{CBF} = \sum_{j=0}^{k-1} u_j^{CBF}$. Based on this definition and equation (15), we can rewrite/define the following controllers:
		
		\begin{equation}
		\begin{split}
		& u_{k-1}(s) = u_{\theta_{k-1}}^{RL}(s) + u_{k-1}^{barrier}(s), \\
		& \pi_{k-1}(a | s) = \pi_{\theta_{k-1}}^{RL} (a - u_{k-1}^{barrier}(s) ~ | ~ s ), \\
		\end{split}
		\end{equation}
		
		\begin{equation}
		\begin{split}
		& u_{k}^{prop}(s) = u^{RL}_{\theta_{k}}(s) + u_{k-1}^{barrier}(s), \\
		& \pi_{k}^{prop}(a | s) = \pi_{\theta_{k}}^{RL} (a - u_{k-1}^{barrier}(s) ~ | ~ s). \\
		\end{split}
		\label{eq:stochastic_control}
		\end{equation}

		We can plug in the above relations for $\pi_{k-1}$ and $\pi^{prop}_{k}$ into inequality (\ref{eq:trpo_proof_1}), to obtain the following bound (we plug in $\pi^{prop}_{k}$ for $\pi_{k}$):
		
		\begin{equation}
		\begin{split}
		& J(\pi_{k}^{prop}) \geq J(\pi_{k-1}) + \frac{1}{1-\gamma} \mathbb{E}_{\substack{s_t \sim \pi_{k-1} \\ a_t \sim \pi_{k}^{prop}}} \Big[ \sum_{t=0}^{\infty} \gamma^t A_{\pi_{k-1}}(s_t,a_t) \\
		& - \frac{2 \gamma \lambda}{1-\gamma} D_{TV} (\pi_{\theta_{k-1}}^{RL} (a - u_{k-1}^{barrier}), \pi_{\theta_{k}}^{RL} (a - u_{k-1}^{barrier} ))\Big],
		\end{split}
		\end{equation}
		
		\noindent
		where we drop the policies' dependency on the state $s$ for compactness. Due to the shift invariance of the total variational distance, $D_{TV}$, we can simplify this to:  
		
		\begin{equation}
		\begin{split}
		& J(\pi_{k}^{prop}) \geq J(\pi_{k-1}) + \frac{1}{1-\gamma} \mathbb{E}_{\substack{s_t \sim \pi_{k-1} \\ a_t \sim \pi_{k}^{prop}}} \Big[ \sum_{t=0}^{\infty} \gamma^t A_{\pi_{k-1}}(s_t,a_t) \\
		& ~~~~~~~ - \frac{2 \gamma \lambda}{1-\gamma} D_{TV} (\pi_{\theta_{k-1}}^{RL} , \pi_{\theta_{k}}^{RL})\Big]. \\
		\end{split}
		\label{eq:trpo_proof_2}
		\end{equation}
		
		Because $\pi_{k-1}$ is a feasible point of the TRPO optimization problem (4) with objective value 0, we know that our solution $\pi_{k}^{prop}$ satisfies the following:
		
		$$\mathbb{E}_{\substack{s_t \sim \pi_{k-1} \\ a_t \sim \pi_{k}^{prop}}} \Big[ \sum_{t=0}^{\infty} \gamma^t A_{\pi_{k-1}}(s_t,a_t) \Big] \geq 0.$$
		
		\noindent
		Since the optimization problem (4) specifies the bound $D_{TV}(\pi_{\theta_{k-1}}^{RL}, \pi_{\theta_{k}}^{RL}) \leq \delta_{\pi}$, then it follows that:
		
		\begin{equation}
		J({\pi_{k}^{prop}}) \geq J({\pi_{k-1}}) - \frac{2 \lambda \gamma}{(1-\gamma)^2} \delta_{\pi},
		\label{eq:perf_bound}
		\end{equation}
		
		\noindent
		where $\lambda = \max_s | \mathbb{E}_{a \sim \pi_{k}^{prop}} [ A_{\pi_{k-1}}(s,a)]|$. The realization of the policy $\pi_{k}^{prop}(a|s)$ is:
		
		$$u_{k}^{prop}(s) = u_{\theta_{k}}^{RL}(s) + u_{k-1}^{barrier}(s) = u_k(s) - u_k^{CBF}(s).$$
		
		Therefore, if we utilize the policy $u_k(s) - u_k^{CBF}(s)$, we can obtain the performance bound in equation (\ref{eq:perf_bound}).

	\end{proof}
	
	~
	
	~
	
	\section{Appendix B: Car-Following Problem}
	
	\subsection{Driver Behavior and System Dynamics}
	
	In this section, we elaborate on the behavior of the cars in the car-following numerical experiment. The dynamics for the drivers follows equation (20), and their acceleration is described as follows:
	
	\begin{equation}
	\begin{split}
	& a^{(1)} = v_{des} - 10 \sin(0.2 t) \\
	& a^{(i)} = k_p (v_{des} - v^{(i)}) - k_b G_1(s^{(i-1)} - s^{(i)}) \textnormal{ for } i = 2, 3 \\
	& a^{(5)} = k_p (v_{des} - v^{(i)}) - \frac{1}{2} k_b G_2(s^{(3)} - s^{(5)}) \textnormal{ for } i = 5 \\
	& ~ G_1(x) = \begin{cases}
	x &\text{if $x \leq 6$}\\
	0 &\text{otherwise} \\
	\end{cases} ,
	~ G_2(x) = \begin{cases}
	x &\text{if $x \leq 12$}\\
	0 &\text{otherwise} \\
	\end{cases} \\
	& k_p = 4, ~~ k_b = 20, ~~ v_{des} = 30, ~~~ a \in [-100, 100]
	\end{split}
	\end{equation}
	
	\noindent
	where $a^{(i)}$ represents the acceleration for driver $i$. In addition, gaussian noise is added to the acceleration of each driver. In driver four's nominal model of the other drivers' behavior, $k_p = 3.5$, $k_b = 18$, and $k_d = 0$. 
	
	\subsection{Explanation for High Reward of DDPG in Initial Trials}
	
	In Figure 5, the reward of DDPG-CBF starts very high for early trials, and then drops to lower values. This arises due to stochasticity in the drivers' behaviors, which makes certain bad control strategies perform well in rare specific cases. 
	
	In \textit{most} trials, our car must accelerate at certain points (decreasing reward) in order to avoid collision with the driver behind. However, if the rear driver significantly slows down during certain trials due to stochasticity in their behavior, our car can simply cruise with little acceleration throughout these trials (these correspond to the few, initial high reward trials). 
	
	This strategy of cruising (little/no acceleration) is generally bad because if the driver behind does not slow down, our car must accelerate heavily at the last second to avoid collision, accumulating heavy penalty. The DDPG algorithm learns to avoid this “do nothing initially” strategy.
	
\end{appendices}

\end{document}